\def\Re{{\mathbb{R}}}
\def\histx{{\tilde{x}}}
\def\histy{{\tilde{y}}}
\def\histY{{\tilde{Y}}}
\def\histu{{\tilde{u}}}
\def\histU{{\tilde{U}}}
\def\E{{\cal{E}}}
\def\is{{\iota}}  
\def\his{{\eta}}  
\def\imap{{\kappa}}
\def\ndtrans{{\cal T}}
\def\fmaphist{{\phi_{hist}}}
\def\xdot{{\dot x}}
\def\atan2{\operatorname{atan2}}
\def\pow{{\rm pow}}
\def\ifs{{\cal I}} 
\def\ifsder{{\cal I}_{der}} 
\def\ifshist{{\cal I}_{hist}} 
\def\ifsmin{{\cal I}_{min}}
\def\ifst{{\cal I}_{task}}
\def\imapb{{\kappa_{task}}} 
\def\ifstask{{\cal I}_{task}} 
\newcommand{\cat}{{}^\frown}
\acrodef{Ispace}[I-space]{\emph{information space}}
\acrodef{Istate}[I-state]{\emph{information state}}
\acrodef{Imap}[I-map]{\emph{information mapping}}
\acrodef{ITS}[ITS]{\emph{information transition system}}
\acrodef{ITSs}[ITSs]{\emph{information transition systems}}
\acrodef{DITS}[DITS]{\emph{deterministic information transition system}}
\acrodef{NITS}[NITS]{\emph{nondeterministic information transition system}}
\acrodef{POMDPs}[POMDPs]{\emph{partially observable Markov decision processes}}
\acrodef{PSRs}[PSRs]{\emph{predictive state representations}}
\begin{document}
\mainmatter              
\title{The Limits of Learning and Planning: \\ Minimal Sufficient Information Transition Systems}
\titlerunning{Minimal Sufficient Information Transition Systems}  
%
\author{Basak Sakcak, Vadim Weinstein, Steven M. LaValle%
\thanks{This work was supported by a European Research Council Advanced Grant (ERC AdG, ILLUSIVE: Foundations of Perception Engineering, 101020977), Academy of Finland (projects PERCEPT 322637, CHiMP 342556), and Business Finland (project HUMOR 3656/31/2019).  {\tt\small (e-mail: firstname.lastname@oulu.fi).}}%
}
\institute{Center of Ubiquitous Computing \\
Faculty of Information Technology and Electrical Engineering \\
University of Oulu, Finland}

\maketitle              

\begin{abstract}
In this paper, we view a policy or plan as a transition system over a space of information states that reflect a robot's or other observer's perspective based on limited sensing, memory, computation, and actuation.  Regardless of whether policies are obtained by learning algorithms, planning algorithms, or human insight, we want to know the limits of feasibility for given robot hardware and tasks. Toward the quest to find the best policies, we establish in a general setting that minimal information transition systems (ITSs) exist up to reasonable equivalence assumptions, and are unique under some general conditions.
We then apply the theory to generate new insights into several problems, including optimal sensor fusion/filtering, solving basic planning tasks, and finding minimal representations for feasible policies. 
\keywords{planning, sensing uncertainty, information spaces, theoretical foundations}
\end{abstract}
%

\section{Introduction}\label{sec:intro}

Robotics increasingly appears as an application area for other fields.
It is a frequent target for designing and testing machine learning
algorithms, planning algorithms, sensor fusion methods, control laws,
and so on.  This may lead many to believe that robotics itself does
not have its own, unique theoretical core (on this, we agree with
Koditschek \cite{Kod21}).  Thus, are we missing something?  Surely
robotics is not a pure algorithmic problem or pure nonlinear
control problem.  Could there be a theory that plays a similar role
to Turing machines for computer science, or $\xdot = f(x,u)$ over
differentiable manifolds for control theory, and yet is distinct from
both?  Can we formulate and potentially answer questions such as: Does a solution even exist to the problem?  What are the minimal necessary components to solve it?  What should the best learning approach imaginable produce as a representation?  Such questions would be analogous to existence and uniqueness in control and dynamical systems, or decidability and complexity (especially Kolomogorov) in theoretical computer science.

\begin{figure}[t]
\begin{center}
\begin{tabular}{ccc}
    \includegraphics[width=0.35\linewidth]{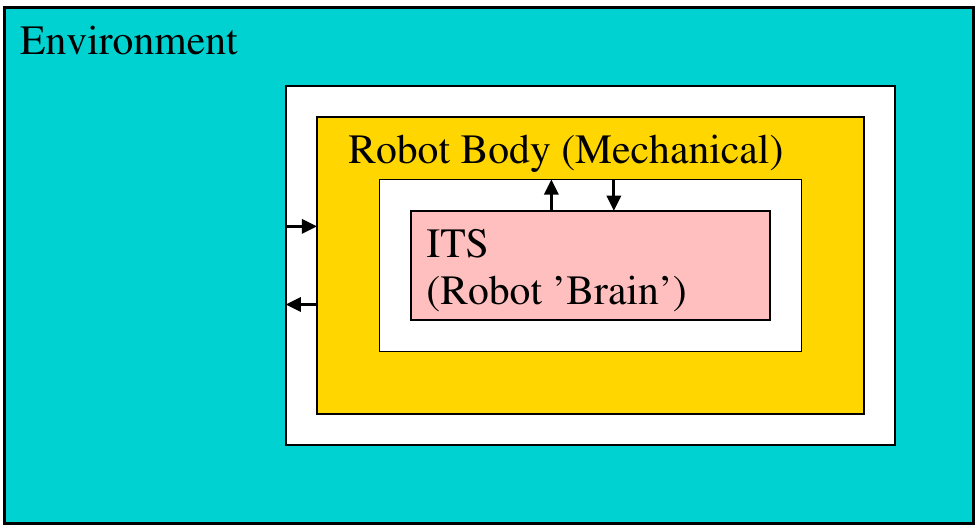} & \hspace*{0.5cm} &
    \includegraphics[width=0.35\linewidth]{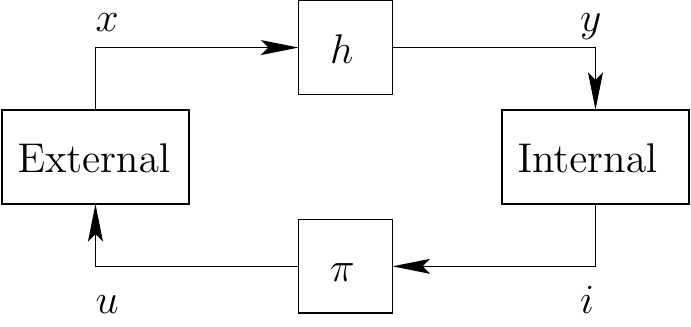} \\
    (a) & & (b)
\end{tabular}
\end{center}\vspace{-2em}
\caption{\label{fig:extint}(a)The {\em internal} robot brain is defined as an ITS that interacts with the {\em external} world (robot body and environment). (b)Coupled internal and external systems mathematically capture sensing, actuation, internal computation, and the external world.}
\vspace{-1.8em}
\end{figure}

This paper proposes a robotics theory that is built from the input-output relationships between a programmable mechanical system (robot) and its
environment via sensing and actuation; see Figure
\ref{fig:extint}(a).  The key is to focus on necessary and sufficient conditions that a robot's internal processor (i.e., ``brain") must maintain to solve required tasks, such as collision-free navigation or coverage. 
We assume 
that the robot hardware and actuation model are fixed, and that for a given set of tasks in a space of environments, we must determine the weakest amount of sensing, actuation, and computation that would
be sufficient for solving tasks.  We will call such conditions {\em minimal sufficient}: If you take away anything from a minimal sufficient system, the tasks will become unsolvable.

We introduce the notion of an {\em information transition system} (ITS) to formally model the robot's brain (as well as any other system observers).  The ``information" part of an ITS is inspired by von Neumann's definitions in the context of
sequential games with hidden information 
(and {\em
  not} Shannon's later notion of information theory). 
This inspired the development of {\em information spaces} \cite{Lav06}
(Chapter 11) as a foundation of planning with imperfect state
information due to sensing uncertainty.  The concept of {\em
  sufficient information mappings} appears therein.  It is generalized
in this paper, and the state space of each ITS will in fact be an
information space.

In our work, the ITS and its underlying information space serve as the domain over which a plan or policy can be expressed and analyzed.  Note that prior work in planning usually assumes that the space one plans over is fixed, as in a configuration space or state (phase) space based on the robot's mobility.  Even the information spaces described in \cite{Lav06} remain fixed in the planning phase.  A notable exception is by O'Kane and Shell \cite{OkaShe17}, in which information spaces for passive filtering are reduced algorithmically, and is closely connected to this paper.  All such spaces will be considered here as potential information spaces, and we intend to reduce or collapse them as much as possible in the development of an information-feedback plan.  This is perhaps closer to the goals of machine learning, in which candidate representations are determined through optimization of discrepancies with respect to input-output data.  In this paper, we in fact consider both model-free and model-based ITSs, in alignment with the choices commonly found in machine learning \cite{BruBerBraLecHasRusGro22}.

The robot's ITS is coupled to the physical world, which is itself modeled as a
transition system.  Note that the physical world model is not ``given"
to the robot; we will formalize notions of ``who gets what" information
in Section \ref{sec:suff}.  The coupled system is inspired by neuroscience
models (for example, \cite{Fri10}).  Many of the concepts in this
paper build upon \cite{WeiSakLav22}, in which we recently proposed an
enactivist-oriented model of cognition based on information spaces.  By {\em enactivist}
\cite{HutMyi12}, it is meant that the necessary brain structures
emerge from sensorimotor interaction and do not necessarily have
predetermined, meaningful components as in the classical
representationalist sense.


Section \ref{sec:math} provides a mathematical formulation of robot-environment interaction as transition systems.  Section \ref{sec:suff} develops notions of sufficiency and minimality over the space of possible ITSs. Section \ref{sec:problems} applies the general concepts to address what it means to solve both passive (filtering) and active 
(planning/control)
tasks minimally and Section \ref{sec:ex} provides simple examples.

\section{Mathematical Models of Robot-Environment Systems}\label{sec:math}

\subsubsection{Internal and external systems}

In this paper, we will consider a robot embedded in an environment and describe this system as two subsystems, named {\em internal} and {\em external}, connected through symmetric input-output relations. External refers to the system describing the physical world, and internal is the complement of it. 
This interaction is shown in Figure~\ref{fig:extint}(b). 
In this sense, the states of the external and internal systems are similar to the use of the term in control theory and computer science, respectively.

External system corresponds to the totality of the environment and the robot body within it. 
Let $X$ denote the set of states of this system; a state could be for example, the configuration of the robot in a known environment (or within a set of possible environments) or its phase.
There are no restrictions on $X$; it may be discrete, an $n$-dimensional manifold, a function space, and so on. 
Next, let $U$ be the set of control inputs (also referred to as actions)
such that when applied at state $x \in X$ causes it to change according to a state transition function $f: X \times U \rightarrow X$. The set $U$ can also be anything: a finite or infinite discrete set, a compact or non-compact manifold, and so on. 
Similarly, at each state $x$, $y=h(x)$ is the output in
which $h: X \rightarrow Y$ is a state-based sensor mapping and $Y$ is the
set of all possible observations. 

The internal system (robot's brain)
observes the external system through a sensor mapping and interacts with it through a selection of 
actions with respect to a policy (alternatively, we can call it a plan or a strategy). Therefore, the input to the internal system is an observation and its output is an action. 
The states of this system correspond to the retained information 
gathered through the outcomes of actions in terms of sensor observations. To this end, the basis of our mathematical formulation of the internal is the notion of \emph{\ac{Ispace}} \cite{Lav06}. We will use the term \emph{\ac{Istate}} to refer to the state of the internal system and denote it with $\is$, and $\ifs$ will denote the set of all I-states, that is, the \emph{\ac{Ispace}}. Note that the notions \ac{Ispace} and \ac{Istate} are not exclusive to the internal system and we will use them in a more general setting in the following sections.
Similar to the external system, the internal system evolves with each $y \in Y$ according to the information transition function $\phi: \ifs \times Y \rightarrow \ifs$. The output then corresponds to the control command given by an information feedback policy $\pi: \ifs \rightarrow U$. 

Finally, we can write the coupled dynamical system composed of these two subsystems defined as external and internal as 
\begin{align}
\centering
    x'&=f(x, \pi(\is)) & y=&h(x) \nonumber \\
    \is'&=\phi(\is,h(x)) & u=&\pi(\is). \label{eqn:coupled_sys}
\end{align}
Whereas the equations on the left side describe the evolution of this coupled system, the ones on the right show the respective outputs of each subsystem. 
Given an initial state $(x_1, \is_1) \in X \times \ifs$, there exists a unique state-trajectory. 

Suppose the system evolves in discrete stages. 
For the external system, starting from an initial state $x_1$, each stage $k$ corresponds to applying an action $u_{k}$ which then yields the next stage $k+1$ and the next state $x_{k+1}=f(x_{k},u_{k})$. 
As the system evolves through stages, $\histx_k=(x_1,x_2,\dots,x_k)$, $\histu_{k-1}=(u_1,u_2,\dots,u_{k-1})$, $\histy_k=(y_1,y_2,\dots, y_k)$, correspond to the state, action and observation histories up to stage $k$, respectively. Note that applying the action $u_k$ at stage $k$ would result in a transition to state $x_{k+1}$ and the corresponding sensor reading $y_{k+1}=h(x_{k+1})$. 
The same applies for the internal system, we can describe its evolution, starting from an initial \ac{Istate} $\is_0$ following the state transition equation $\is_{k}=\phi(\is_{k-1},y_k)$. At stage $k$, $\pi(\is_k)$ would produce the action $u_k$. Note that the stage index of the \ac{Istate} starts from $0$, this corresponds to any prior information the internal system might have regarding the external; $\is_1$ is then obtained using $\is_0$ and $y_1$. Furthermore, $\pi(\is_0)=u_0=()$ for all policies $\pi$, meaning that no 
action is outputted at this stage.

\subsubsection{Generalizing to transition systems}

Without loss of generality, we can describe the internal and external subsystems as transition systems of the form $(S, \Lambda, T)$ in which $S$ is the set of states, $\Lambda$ is the set of names for the outgoing transitions, and $T \subset S \times \Lambda \times S$ is a ternary relation describing the transitions. If for each $(s,\lambda) \in S \times \Lambda$ there is a unique $s' \in S$ such that $(s,\lambda,s') \in T$, then we will write this system as $(S, \Lambda, \tau)$ in which $\tau: S\times \Lambda \rightarrow S$ is a function, and call the system an {\em automaton}. 
This corresponds to a deterministic system. Note that our definition of an automaton differs from the one usually used in computer science in the sense that ours do not necessarily have a start state and a set of accepting states, and it is not necessarily finite. Suppose $\ndtrans : S \times \Lambda \rightarrow \pow(S)$, in which $\pow(\cdot)$ denotes the power set. Then, the transition system $(S, \Lambda, \ndtrans)$ is a nondeterministic automaton. 

In \cite{WeiSakLav22}, we have 
used the notion of state-relabeled transition systems to model the internal and external systems. A state-relabeled transition system
is the quintuple $(S,\Lambda,T,\sigma,L)$ in which $\sigma: S \rightarrow L$ is a labeling function and $(S,\Lambda,T)$ is a transition system. Preimages of a labeling function $\sigma$ induce a partitioning of the state space $S$. Let $S/\sigma$ be the set of equivalence classes $[s]_\sigma $ induced by $\sigma$ such that $S/\sigma=\{[s]_\sigma \mid s\in S\}$ and $[s]_\sigma=\{s'\in S \mid \sigma(s')=\sigma(s)\}$. Then, we can define a new transition system $(S/\sigma,\Lambda,T/\sigma)$ called the \emph{quotient} of $(S,\Lambda,T)$ by $\sigma$, in which $T/\sigma=\{\left([s]_\sigma, \lambda, [s']_\sigma\right) \mid (s, \lambda, s') \in T\}$. Note that $(S/\sigma,\Lambda,T/\sigma)$ is a reduced version of $(S,\Lambda,T)$. 
We might be interested in finding a labeling function $\sigma$ such that the corresponding quotient transition system is as simple as possible while ensuring that it is still useful. In the following sections, we will provide motivations for a reduction and discuss in more detail the requirements on $\sigma$ for the quotient system to be useful. 

Considering the deterministic case and the description of an automaton given above, external and internal systems can be written as state-relabeled automata $(X,U,f,h,Y)$ and $(\ifs,Y,\phi,\pi,U)$, respectively, in which $h$ and $\pi$ are considered as labeling functions.
Interpreting the labels as the output of a transition system,
coupled internal-external system can be described in terms of the state-relabeled transition systems formulation too such that output of one transition system is an input for another. Described this way, coupling of two transition systems result in unique paths in either automaton initialized at a particular state.  

\section{Sufficient Information Transition Systems}\label{sec:suff}

\subsection{Information transition systems}\label{sec:ITS_prespectives}

In the general setting, an \ac{Istate} corresponds to the available (stored) information at a certain stage with respect to the action and observation histories. Consequently, an \ac{Ispace} refers to the collection of all possible \ac{Istate}s. We will use the term \emph{\ac{ITS}} to refer to a transition system whose state space is an \ac{Ispace}.
We have already used the notion of \ac{Ispace} while modeling the internal system representing the robot brain, which makes it an \ac{ITS}. Here, we extend the notion of an \ac{ITS} to include different perspectives from which the external and the coupled system is viewed. In particular, we identified three perspectives corresponding to 1) a plan executor 
which corresponds to the robot brain 2) a planner, and 3) an (independent) observer. With a slight abuse of previously introduced notation and terminology, we will use the term ``internal" to refer to any system that is not the external and we will use $\ifs$ to denote a generic \ac{Ispace}. 
We describe an \ac{ITS} in a robot-centric way such that an observation will refer to a sensor-reading, that is, $y$. However, an independent observer defined over the coupled system
can observe, at stage $k$, both the action taken $u_{k-1}$ and the corresponding sensor-reading $y_k$.

Recall that the information regarding the external is obtained through the sensor-mapping and any potential prior knowledge. Suppose that no policy is fixed over the \ac{Ispace}. Then, the corresponding internal system can be modeled as an \ac{ITS} of the form $(\ifs, U \times Y, \phi$), in which $\phi: \ifs \times (U\times Y) \rightarrow \ifs$ is a state (information) transition function, if it is deterministic. We will then use the term \ac{DITS} to refer to them. Otherwise, it is called a \ac{NITS} and described as $(\ifs, U \times Y, \Phi)$, in which $\Phi \subseteq \ifs \times (U\times Y) \times \ifs$ is the transition relation.
This formulation corresponds to the perspectives other than the plan executor such that it is possible to take any action from an \ac{Istate} as it is not constrained by a policy, hence the outgoing transitions are determined by the elements of $U\times Y$.

A plan executor corresponds to the internal system (robot's brain) described in the previous section. 
The only information regarding the external is gained through manipulating the state of the external system through 
actions
and obtaining the corresponding sensor readings. Recall the representation used in the previous section, that is, $(\ifs, Y, \phi)$, and $\pi:\ifs \rightarrow U$ a labeling function. This can be considered as a constrained version of the \ac{DITS} described in the previous paragraph such that the transitions are restricted to those that can be realized under $\pi$. To show that, we augment the definition of internal system corresponding to the robot brain such that the transitions now also correspond to labels. Let $(\ifs, U \times Y, \Phi)$ be the augmented transition system describing the internal such that 
\begin{equation}\label{eq:augItS}
\Phi= \{ (\is, (u,y), \is') \in \ifs \times (U \times Y) \times \ifs \mid u=\pi(\is) \land \is'=\phi(\is,y)\},
\end{equation}
by construction, this augmented \ac{ITS} is also deterministic\footnote{We could use the same approach for the external system too. In that case, let $(X, Y \times U, F)$ be this augmented transition system corresponding to the external, in which $F$ is the set of transitions such that $F =\{ (x, (y,u), x') \in X \times (Y \times U) \times X \mid y=h(x) \land x'=f(x,u)\}$. Further creating bipartite graphs (for either system) such that transitions from a state correspond either to an observation $y\in Y$ or to an action $u\in U$ allows us to describe the coupling as a form of intersecting two automata. However, because it is not central to this paper we will not elaborate on this topic.}. Suppose $(\ifs, U\times Y, \Phi')$ is the \ac{DITS} that is not constrained by a policy.
Then, $\Phi \subseteq \Phi'$.

\subsection{History information spaces}\label{sec:hist_ispace}
An \ac{Ispace} constitutes the state space of an \ac{ITS}. Therefore, we describe the basic \ac{Ispace} named \emph{history \ac{Ispace}} denoted as $\ifshist$. 
It will be used to derive other \ac{Ispace}s as well. 
A \emph{history \ac{Istate}} at stage $k$ corresponds to all the information that is gathered through sensing (and potentially also through actions) 
up to stage $k$ assuming perfect memory. Let $\his_k$ denote the history \ac{Istate} at stage $k$, that is $\his_k = (\his_0, \histu_{k-1}, \histy_k)$, in which $\his_0$ is the initial condition. 
Recall that $\histu_0$ is assumed to be the null-tuple, hence, $\histu_k$ starts with $u_1$ for any $k>1$.

Let $\ifs_0$ be the set of initial conditions whose description varies with the available prior information. We defer the descriptions of possible $\ifs_0$ to the following paragraph. 
The history information space at stage $k$ is expressed as $\ifs_k=\ifs_0 \times \histU_{k-1} \times \histY_k$. In general, the number of stages that the system will go through is not fixed. Therefore, we can define history \ac{Ispace} as the union over all $k \in \mathbb{N}$, that is, $\ifshist= \bigcup_{k \in \mathbb{N}}\ifs_k$. The \ac{DITS} corresponding to $\ifshist$ becomes $(\ifshist, U \times Y, \fmaphist)$, in which $$\his_{k}=\fmaphist(\his_{k-1},u_{k-1},y_{k})=\his_{k-1}\cat u_{k-1}\cat y_k$$ 
and $\cat$ is the concatenation operation that adds an element at the end of a sequence. 

We consider two categories of initial conditions depending on whether information regarding the state space $X$ of the external system is available or not. Suppose $X$ or any information regarding $X$ is not given. Then, an \ac{Istate} at stage $k$ simply is $\his_k = (\histu_{k-1}, \histy_k)$, that is, the concatenation of action and observation histories up till stage $k$. 
We call this type of history \ac{Ispace}, the \emph{model-free history \ac{Ispace}}, and respectively call the corresponding \ac{ITS}, \emph{model-free history \ac{ITS}}. In this case, we can treat $\his_0$ as $\his_0=()$.
Thus, $\ifs_0=\{()\}$.
For the second category of initial conditions, full or partial information regarding $X$, against which the actions and observations can be interpreted, is given. We will then use the terms \emph{model-based history \ac{Ispace}} and \emph{model-based history \ac{ITS}} to refer to the respective \ac{Ispace} and \ac{ITS}.
The initial condition $\his_0$ could be (i) a known state $x_1 \in X$ such that $\ifs_0 = X$, (ii) a set of possible initial states $X_1 \subset X$ such that $\ifs_0=\pow(X)$ or (iii) a probability distribution $P(x_1)$ over $X$
such that $\ifs_0 \subseteq \mathcal{P}(X)$, in which $\mathcal{P}(X)$ is the set of all probability distributions over $X$. 

\vspace{-0.5em}
\subsection{Sufficient state-relabeling}
In \cite{WeiSakLav22} we have introduced a notion of \emph{sufficiency} that substantially generalizes the definition 
in Chapter 11 of \cite{Lav06} and is presented 
here for completeness.
\begin{definition}[Sufficient state-relabeling]
Let $(S,\Lambda,T)$ be a transition system. A labeling function $\sigma:S \rightarrow L$ defined over the states of a transition system is sufficient if and only if for all $s,q,s',q' \in S$ and all $\lambda \in \Lambda$, the following implication holds:
\[
\sigma(s)=\sigma(q) \land (s,\lambda,s')\in T \land (q,\lambda,q')\in T \implies \sigma(s')=\sigma(q'). 
\]
If $\sigma$ is defined over the states of an automaton $(S,\Lambda,\tau)$, then $\sigma$ is sufficient iff for all $s,q\in S$ and all $\lambda \in \Lambda$, $\sigma(s)=\sigma(q)$ implies that $\sigma(\tau(s,\lambda))=\sigma(\tau(q,\lambda))$.
\label{def:sufficiency}
\end{definition}

Consider the stage-based evolution of
external system $(X,U,f,h,Y)$ with respect to the action (control input) sequence $\histu_{k-1}=(u_1,\dots,u_{k-1})$. This corresponds to the state and observation histories till stage $k$, that are $\histx_k=(x_1,\dots,x_k)$ and $\histy_k=(y_1,\dots,y_k)$. Recall that applying $u_k$ at stage $k$ would result in a transition to $x_{k+1}$ and the corresponding observation $y_{k+1}=h(x_{k+1})$.
Hence, in this context, sufficiency of $h$ implies that given the label $y_k=h(x_{k})$ and the action $u_k$, it is possible to determine the label $y_{k+1}=h(x_{k+1})$. One interpretation of sufficiency of $h$ is that the respective quotient system 
sufficiently represents the underlying system up to the induced equivalence classes. This notion is similar to minimal realization of a system, that is, the minimal state space description that models the given input-output measurements (see for example \cite{KotMooTon18}). Second interpretation is in a predictive sense. 
Suppose the quotient system is known. Then, the label $y_{k+1}=h(x_{k+1})$ can be determined before the system gets to $x_{k+1}$, using the current label $y_k$ and the action to be applied $u_k$. 
Furthermore, under a fixed policy, complete observation-trajectory can be determined from the initial observation by induction. 

Now, consider an internal system with a labeling function $\imap: \ifs \rightarrow \ifs'$, that is, $(\ifs,U\times Y,\phi,\kappa,\ifs')$, and its evolution with respect to the observation history $\tilde{y}=(y_1,\dots,y_{k})$. At stage $k$, the state of the automaton is $\is_{k}$ and with $(u_k,y_{k+1})$ the system transitions to $\is_{k+1} = \phi(\is_k, u_k, y_{k+1})$. Sufficiency of $\imap$ implies that given $\imap(\is_k)$, $u_k$, and $y_{k+1}$, we can determine $\imap(\is_{k+1})$. This is equivalent to the definition introduced in Chapter 11 of \cite{Lav06} and makes it a special case for Definition~\ref{def:sufficiency}. 

\subsection{Derived information transition systems}\label{sec:derived_ITS} 
Even though it seems natural to rely on a history \ac{ITS}, dimension of a history \ac{Ispace} increases linearly with each stage, making it impractical in most cases. 
Thus, we are interested in defining a reduced \ac{ITS} that is more manageable.
Furthermore, this would largely simplify the description of a policy for a planner or a plan executor. 

Recall the quotient of a transition system by a a labeling function. 
We rewrite $(\ifshist, U \times Y, \phi_{hist})$ as $(\ifshist, U \times Y, \Phi_{hist})$, in which 
\begin{equation} \label{eq:Set_Phi}
\Phi_{hist}=\{(\his, (u,y), \phi_{hist}(\his,u,y)) \in \ifshist \times (U \times Y) \times \ifshist \}.     
\end{equation}
We can introduce an \ac{Imap} $\imap: \ifshist \rightarrow \ifsder$ that categorizes the states of $\ifshist$ into equivalence classes through its preimages. 
In this case, $\imap$ serves as a labeling function and a reduction can be obtained in terms of the quotient of $(\ifshist, U \times Y, \Phi)$ by $\imap$, that is, $(\ifshist/\imap, U \times Y, \Phi/\imap)$. 

It is crucial that the derived \ac{ITS} is a \ac{DITS} so that the labels can be determined using only the derived \ac{ITS} without making reference to the history \ac{ITS}.
Considering the quotient system derived from $(\ifshist, U \times Y, \phi)$, which is a \ac{DITS} by definition, by $\imap$, we can not always guarantee that the resulting \ac{ITS} is deterministic. This depends on the \ac{Imap} used for state-relabeling as stated in the following proposition. 

\begin{proposition} \label{prop:non_sufficient_label}
For all non-empty $U$ and $Y$, and for the corresponding $\ifshist$, 
there exists a labeling function $\imap$ such that the quotient of 
$(\ifshist, U \times Y, \phi)$ by $\kappa$, that is, $(\ifshist/\imap, U \times Y,  \Phi/\imap)$, in which $\Phi$ is defined as in \eqref{eq:Set_Phi},
is not a \ac{DITS}. 
\end{proposition}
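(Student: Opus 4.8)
The plan is to reduce the claim to the existence of a \emph{non-sufficient} labeling function, and then to exhibit one explicitly. First I would observe that the quotient $(\ifshist/\imap, U \times Y, \Phi/\imap)$ is a \ac{DITS} if and only if $\imap$ is sufficient in the sense of Definition~\ref{def:sufficiency}. Indeed, $(\ifshist, U\times Y, \fmaphist)$ is a deterministic automaton, so for every class $C$ and every label $(u,y)\in U\times Y$ there is at least one outgoing transition in $\Phi/\imap$; such a transition fails to be unique exactly when there exist $\his,\his_\ast\in C$ whose successors lie in different classes. By the automaton form of Definition~\ref{def:sufficiency}, this is precisely the failure of the implication $\imap(\his)=\imap(\his_\ast)\Rightarrow \imap(\fmaphist(\his,u,y))=\imap(\fmaphist(\his_\ast,u,y))$. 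Hence $\Phi/\imap$ is a function iff $\imap$ is sufficient, and it suffices to construct a labeling function that is not sufficient.

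To build one, since $U$ and $Y$ are non-empty I would fix $u^\ast\in U$ and $y^\ast\in Y$, pick any history \ac{Istate} $\his_a\in\ifshist$, and set $\his_b=\fmaphist(\his_a,u^\ast,y^\ast)=\his_a\cat u^\ast\cat y^\ast$ and $\his_c=\fmaphist(\his_b,u^\ast,y^\ast)=\his_b\cat u^\ast\cat y^\ast$. Because $\cat$ strictly increases the length of the recorded history, $\his_a,\his_b,\his_c$ are three distinct elements of $\ifshist$, and both $(\his_a,(u^\ast,y^\ast),\his_b)$ and $(\his_b,(u^\ast,y^\ast),\his_c)$ belong to $\Phi$ by \eqref{eq:Set_Phi}. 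I would then define $\imap:\ifshist\to\{0,1\}$ by $\imap(\his_c)=1$ and $\imap(\his)=0$ for every other $\his$; this is a well-defined function precisely because the three histories are distinct.

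Finally I would verify non-sufficiency directly: $\imap(\his_a)=\imap(\his_b)=0$, so $\his_a$ and $\his_b$ lie in the same equivalence class, yet on the common label $(u^\ast,y^\ast)$ their successors satisfy $\imap(\his_b)=0\ne 1=\imap(\his_c)$. Thus the class $[\his_a]_\imap=[\his_b]_\imap$ admits two distinct $(u^\ast,y^\ast)$-successor classes in $\Phi/\imap$, so $\Phi/\imap$ is not a function and the quotient is not a \ac{DITS}, as claimed.

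I expect the only genuine subtlety, and the step I would be most careful about, is ensuring that the two transitions witnessing non-determinism carry the \emph{same} label $(u^\ast,y^\ast)$; witnessing separation under two different labels would be harmless for a \ac{DITS}. Keeping all three histories distinct is exactly what makes this same-label requirement compatible with separating the successors, while simultaneously guaranteeing that $\imap$ is a legitimate function and that both transitions genuinely occur in $\Phi$. For definiteness I would carry out the construction in the model-free history \ac{Ispace}, where every admissible action-observation sequence is a valid \ac{Istate}; the identical argument transfers to any model-based $\ifshist$ as soon as it contains one history admitting two consecutive $(u^\ast,y^\ast)$-transitions.
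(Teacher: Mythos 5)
Your proof is correct and follows essentially the same strategy as the paper's: exhibit an explicit two-valued labeling $\imap$ together with two histories in the same $\imap$-class whose successors under a common label $(u^\ast,y^\ast)$ fall into different classes, so that $\Phi/\imap$ fails to be a function. The only difference is the choice of witness labeling (the paper separates the constant-action histories at a fixed stage from their complement, while you isolate a single history $\his_c$ against everything else), and your version is if anything cleaner, since it avoids any case analysis on $|U|$ and makes the distinctness of $\his_a,\his_b,\his_c$ immediate from the length argument.
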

\begin{proof}
Let $\imap: \ifshist \rightarrow \{l_1, l_2\}$ such that $\imap^{-1}(l_1) = \{\his_{k}=(\histu_{k-1}, \histy_k) \in \ifshist \mid \histu_{k-1}=(u_i)_{i=1,\dots,k-1}, u_i=u, \forall i=1,\dots,k-1 \}$ is the set of histories that correspond to applying the same action for $k-1$ times and $\imap^{-1}(l_2)$ is its complement, that is, $\imap^{-1}(l_2)=\ifshist \setminus \imap^{-1}(l_1)$. Then, there exist sequences $\his_{k-2}=(\histu_{k-3},\histy_{k-2})$ and $\his_{k-1}=(\histu_{k-2},\histy_{k-1})$ such that $\his_{k-2}=\his_{k-1}\cat (u,y)$ and $\his_{k}=\his_{k-1}\cat (u,y)$ for which $\imap(\his_{k-2})=\imap(\his_{k-1})=l_2$ and $\imap(\his_k)=l_1$. Thus, 
\[
\{([\his_{k-2}]_\imap, (u,y), [\his_{k-1}]_\imap), ([\his_{k-1}]_\imap, (u,y), [\his_{k}]_\imap) \} \in \Phi/\imap.
\]
Since $[\his_{k-2}]_\imap=[\his_{k-1}]_\imap$ and $[\his_{k-1}]_\imap \neq [\his_{k}]_\imap$, the transition corresponding to $([\his_{k-1}]_\imap, (u,y))$ is not unique; thus, $(\ifshist/\imap, U \times Y,  \Phi/\imap)$ is not deterministic.\qed
\end{proof}

Note that Proposition~\ref{prop:non_sufficient_label} holds also in the case of a generic \ac{ITS} $(\ifs, U \times Y, \phi)$, with non-history \ac{Istate}s, if $\exists s,s',q,q' \in \ifs$ such that $\{(s,(u,y),s'), (q,(u,y),q')\} \in \Phi$, in which $\Phi$ is defined using $\phi$ as in \eqref{eq:Set_Phi}. Then, any \ac{Imap} $\imap$ such that $\imap(s) = \imap(q)$ and $\imap(s') \neq \imap(q')$ results in a quotient system that is not a \ac{DITS}. 

For the quotient system derived from $(\ifshist, U \times Y, \phi)$ to be a \ac{DITS} depends on the sufficiency of $\imap$. In \cite{WeiSakLav22} 
it is shown that the quotient of a transition system $(S,\Lambda,T)$ by a labeling function $\sigma$ is an automaton (recall our definition) if and only if $(S,\Lambda,T)$ is full\footnote{A transition system $(S,\Lambda,T)$ is full, if $\forall s\in S, \lambda \in \Lambda$ there exists at least one $s'\in S$ with $(s,\lambda,s')\in T$.} and $\sigma$ is sufficient.
As $\phi_{hist}$ is a function with domain $\ifshist \times (U\times Y)$, it is full, then, the following follows from \cite{WeiSakLav22} as a special case. 

\begin{proposition}\label{prop:suff_label}
Let $(\ifshist/\imap, U \times Y,  \Phi_{hist}/\imap)$ be the quotient of $(\ifshist, U \times Y, \phi_{hist})$ by $\imap$, in which $\Phi$ is defined as in \eqref{eq:Set_Phi},
then $(\ifshist/\imap, U \times Y,  \Phi/\imap)$ is a \ac{DITS} if and only if $\imap$ is sufficient.
\end{proposition}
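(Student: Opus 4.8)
The plan is to read off the claim as a special case of the general characterization recalled immediately before the statement, namely that the quotient of $(S,\Lambda,T)$ by a labeling function $\sigma$ is an automaton if and only if $(S,\Lambda,T)$ is full and $\sigma$ is sufficient. The first thing I would check is the fullness hypothesis for the base system: since $\phi_{hist}$ is a total function on $\ifshist \times (U\times Y)$, the set $\Phi_{hist}$ of \eqref{eq:Set_Phi} contains, for every $\his$ and every $(u,y)$, the triple $(\his,(u,y),\phi_{hist}(\his,u,y))$, so $(\ifshist, U\times Y, \Phi_{hist})$ is full. With fullness automatic, the general characterization degenerates to ``the quotient is a \ac{DITS} if and only if $\imap$ is sufficient,'' which is exactly the claim. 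To keep the argument self-contained I would nonetheless unfold both implications against Definition~\ref{def:sufficiency}.

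For the backward direction I would assume $\imap$ sufficient and verify that the quotient is deterministic, i.e.\ that each pair $([\his]_\imap,(u,y))$ has a unique successor class. Existence is inherited from totality of $\phi_{hist}$: any representative yields the transition $([\his]_\imap,(u,y),[\phi_{hist}(\his,u,y)]_\imap)$. For uniqueness, suppose two quotient transitions $([\his]_\imap,(u,y),A)$ and $([\his]_\imap,(u,y),B)$ both belong to $\Phi_{hist}/\imap$. By the definition of the quotient relation they arise from (possibly distinct) representatives $\his_1,\his_2 \in [\his]_\imap$ with $A=[\phi_{hist}(\his_1,u,y)]_\imap$ and $B=[\phi_{hist}(\his_2,u,y)]_\imap$. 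Since $\imap(\his_1)=\imap(\his_2)$, the automaton form of sufficiency gives $\imap(\phi_{hist}(\his_1,u,y))=\imap(\phi_{hist}(\his_2,u,y))$, hence $A=B$. For the forward direction I would assume the quotient is a \ac{DITS} and take any $\his_1,\his_2$ with $\imap(\his_1)=\imap(\his_2)$ together with any $(u,y)$; both $[\phi_{hist}(\his_1,u,y)]_\imap$ and $[\phi_{hist}(\his_2,u,y)]_\imap$ are successors of the single pair $([\his_1]_\imap,(u,y))$, so determinism forces them to coincide, which is precisely sufficiency of $\imap$.

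The only real subtlety---and the step I would treat most carefully---is the unfolding of the quotient transition relation $\Phi_{hist}/\imap$ in the uniqueness argument. Because $\Phi_{hist}/\imap$ is defined by an existential over representatives, two outgoing transitions of a class $[\his]_\imap$ carrying the same label $(u,y)$ need not originate from the same history; they may come from different preimages $\his_1,\his_2$ of the same $\imap$-label. Consequently uniqueness cannot be obtained from determinism of $\phi_{hist}$ alone, and it is exactly the sufficiency condition that bridges distinct representatives by guaranteeing their $(u,y)$-successors share a label. Keeping the representatives explicit (rather than conflating a class with a single history) is what makes both implications go through cleanly.
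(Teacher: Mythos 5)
Your proposal is correct and follows essentially the same route as the paper, which proves the proposition by invoking the general result from the cited prior work (quotient is an automaton iff the base system is full and the labeling is sufficient) after observing that $\phi_{hist}$, being a total function on $\ifshist \times (U \times Y)$, makes the history ITS full. Your additional unfolding of both implications, with explicit attention to the existential quantification over representatives in the quotient relation, is a faithful expansion of the same argument rather than a different approach.
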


For an \ac{Imap} $\imap: \ifshist \rightarrow \ifsder$, $(\ifshist/\imap, U \times Y, \Phi_{hist}/\imap)$ is isomorphic to $(\ifsder,U \times Y,\Phi_{der})$, in which $\Phi_{der}=\{(\imap(\his),(u,y),\imap(\his')) \mid (\his, (u,y), \his') \in \Phi_{hist}\}$ 
\cite{WeiSakLav22}. 
Thus, we can use the labels introduced by $\imap$ as the new (derived) \ac{Ispace} and the corresponding quotient system as the derived \ac{ITS}. Suppose, $\imap$ is sufficient. Then, the derived \ac{ITS} is a \ac{DITS}, meaning that given the \ac{Istate} $\is_{k-1} \in \ifsder$, and ($u_{k-1}$, $y_k$), $\is_{k+1}\in \ifsder$ can be uniquely determined. Consequently, we can write the derived \ac{ITS} as $(\ifsder,U \times Y,\phi_{der})$ in which $\phi_{der}: \ifsder \times (U \times Y) \rightarrow \ifsder$ is the new information transition function. Therefore, we no longer need to rely on the full histories and the history \ac{ITS} and can rely solely on the derived \ac{ITS}. This is shown in the first two rows of the following diagram: 
\vspace{-1em}
\begin{equation}\label{eqn:dia}
  \begin{tikzcd}[row sep=0.4cm]
    \ifshist \arrow{r}{u_1,y_2} \arrow{d}{\imap} &
    \ifshist \arrow{r}{u_2,y_3} \arrow{d}{\imap} &
    \ifshist \arrow{r}{u_3,y_4} \arrow{d}{\imap} &
    \ifshist \arrow{r}{u_4,y_5} \arrow{d}{\imap} &
    \ifshist \arrow{d}{\imap} \arrow[dash, dotted]{r} & \phantom{.}\\
    \ifsder \arrow{r}{u_1,y_2} \arrow{d}{\imap'} &
    \ifsder \arrow{r}{u_2,y_3} \arrow{d}{\imap'} &
    \ifsder \arrow{r}{u_3,y_4} \arrow{d}{\imap'} &
    \ifsder \arrow{r}{u_4,y_5} \arrow{d}{\imap'} &
    \ifsder \arrow{d}{\imap'} \arrow[dash, dotted]{r} & \phantom{.}\\
    \ifsmin \arrow{r}{u_1,y_2} \arrow{d}{\imap''} &
    \ifsmin \arrow{r}{u_2,y_3} \arrow{d}{\imap''} &
    \ifsmin \arrow{r}{u_3,y_4} \arrow{d}{\imap''} & 
    \ifsmin \arrow{r}{u_4,y_5} \arrow{d}{\imap''} & 
    \ifsmin \arrow{d}{\imap''} \arrow[dash, dotted]{r} & \phantom{.}\\
    \ifst  &
    \ifst  &
    \ifst  &
    \ifst  & \ifst .
  \end{tikzcd}
\end{equation}
Note that we can similarly define an \ac{Imap} that maps any derived \ac{Ispace} to another. An example is given in \eqref{eqn:dia} as the mappings $\imap':\ifsder \rightarrow \ifsmin$ and $\imap'':\ifsmin \rightarrow \ifst$. The corresponding quotient system is deterministic for $\imap'$, indicating that it is sufficient. However, the quotient system by $\imap''$ derived from $\ifsmin$ is not deterministic, hence, $\imap''$ is not sufficient, as the next \ac{Istate} can not be uniquely determined. Note that an \ac{Imap} whose domain is $\ifshist$ can also be defined as composition of the mappings along the column of the diagram. For instance, $\imap_{min}:\ifshist \rightarrow \ifsmin$ is the composition of $\imap$ and $\imap'$, that is, $\imap_{min}=\imap' \circ \imap$ (same for $\imapb: \ifshist \rightarrow \ifst$). 

\subsection{Lattice of information transition systems}

We fix $\ifshist$, which corresponds to fixing the set of initial states $\ifs_0$. Then, each \ac{Imap} $\imap$ defined over $\ifshist$ induces a partition of $\ifshist$ through its preimages, denoted as $\ifshist / \imap$. An \ac{Imap} $\imap'$ is a refinement of $\imap$, denoted as $\imap' \succeq \imap$, if $\forall A \in \ifshist / \imap'$ there exists a $B \in \ifshist / \imap$ such that $A \subseteq B$. Let $K(\ifshist)$ denote the set of all partitions over $\ifshist$. Refinement induces a partial ordering since not all partitions of $\ifshist$ are comparable. 
The partial ordering given by refinements form a lattice of partitions over $\ifshist$, denoted as $(K(\ifshist),\succeq)$. 

At the top of the lattice, there is the partition induced by an identity \ac{Imap} (or equivalently, by a bijection), $\imap_{id}: \ifshist \rightarrow \ifshist$, since all of its elements are singletons (all equivalence classes contain exactly one element), making it the maximally distinguishable case. Conversely, we can define a constant mapping $\imap_{const}: \ifshist \rightarrow \ifs_{const}$ for which $\ifshist/\imap_{const}$ is a singleton, that is, $\ifs_{const}=\{\is_{const}\}$, which then will be at the bottom of the lattice. In turn, $\imap_{const}$ yields the minimally distinguishable case as all histories now belong to a single equivalence class.
This idea is similar to the notion of the \emph{sensor lattice} defined over the partitions of $X$ \cite{Lav12b,ZhaShe21}. Indeed, if we take $\ifs_0=X$ and consider $\imap_{est}: \ifshist \rightarrow X$, the ordering of partitions of $\ifshist$ such that $\ifshist/\imap_{est}$ is the least upper bound gives out the sensor lattice.

As motivated in previous sections, we are interested in finding a sufficient \ac{Imap} such that the quotient \ac{ITS} derived from the history \ac{ITS} is still deterministic. Notice that the constant \ac{Imap} $\imap_{const}$ is sufficient by definition since 
for all $(u,y) \in U\times Y$, and all $\his,\his' \in \ifshist$, we have that $\imap_{const}(\his)=\imap_{const}(\his')$ and $\imap_{const}(\phi_{hist}(\his,(u,y)) = \imap_{const}(\phi_{hist}(\his',(u,y)).$
On the other hand, in certain cases it is crucial to differentiate certain histories from the others. This will become clear in the next section when we describe the notion of a task.  
Suppose $\imap$ is a labeling that partitions $\ifshist$ into equivalence classes that are of importance and suppose that $\imap$ is not sufficient. Then, we want to find a refinement of $\imap$ that is sufficient. 
This will serve as a lower bound on the lattice of partitions over $\ifshist$ since for any partition such that $\ifshist/\imap$ is a refinement of it, the  
classes of histories that are deemed crucial will not be distinguished. The following defines the refinement of $\imap$ that ensures sufficiency and a minimal number of equivalence classes.

\begin{definition}
Let $(\ifshist,U \times Y, \fmaphist)$ be a history \ac{ITS} and $\imap$ an \ac{Imap}. A \emph{minimal sufficient refinement} of $\imap$ is a sufficient \ac{Imap} $\imap'$ such that there does not exist a sufficient \ac{Imap} $\imap''$ that satisfy $\imap' \succ \imap'' \succeq \imap$.
\end{definition}

\begin{remark}
It is shown in \cite{WeiSakLav22} that the minimal sufficient refinement of $\imap$ defined over the states of an automaton $(S,\Lambda,\tau)$ is unique.
\end{remark}

\section{Solving Tasks Minimally}\label{sec:problems}
\subsubsection{Definition of a task}
We now connect the general ITS concepts to the accomplishment of particular tasks.  We have two categories: 1) {\em active}, which corresponds to planning and executing an information-feedback policy that forces a desirable outcome in the environment, and 2) {\em passive}, which means only to observe the environment without being able to effect changes.  Recall from Section \ref{sec:derived_ITS} that there may be model-free or model-based formulations.
In the model-free case, tasks are specified using a logical language
over $\ifshist$ which will result in a labeling and derived I-space
$\ifst$ and associated I-map $\imapb$ that corresponds to the
``resolution'' at which the tasks are specified.  Various logics are
allowable, such as propositional or a temporal logic.  The resulting
sentences of the language involve combinations of predicates that may
assign true or false values to subsets of $\ifshist$.  Solving an active task
(or tasks) requires that a sentence of interest becomes true during
execution of the policy.  This is called {\em satisfiability}.  For
example, the task may be to simply reach some goal set $G \subset
\ifshist$, causing a predicate {\sc in-goal}$(\ifshist)$ to become
satisfied (in other words, be true).  Using linear temporal logic,
more complex requirements, such as cycling through a finite sequence
of subsets forever while avoiding others, can be specified
\cite{FaiGirKrePap09}.  

Solving a passive task only requires maintaining whether a sentence is satisfied, rather than forcing an outcome; this corresponds to filtering.  Whether the task is active or passive, if satisfiability is concerned with a single, fixed sentence, then a {\em task-induced labeling} (or {\em task labeling} for short), that is, $\imapb$, over
$\ifshist$ assigns two labels: Those I-states that result in true and
those that result in false.  A task labeling may also be assigned for
a set of possible sentences by assigning a label to each set of the
common refinement of the partition of $\ifshist$ induced by each
possible sentence.  In the model-based case, tasks are instead
specified using a language over $X$, and sentence satisfiability must
be determined by an I-map that converts history I-states into
expressions over $X$.

\subsubsection{Problem families}
It is assumed that the state-relabeled transition system $(X, U, f, h,Y)$ describing the external system is fixed, but it is unknown or partially known to the observer (a robot or other observer).


Filtering (passive case) requires maintaining the label of an \ac{Istate} attributed by $\imapb$. Since $\imapb$ is not necessarily sufficient, we can not guarantee that the quotient system by $\imapb$ is a \ac{DITS} (Propositions~\ref{prop:non_sufficient_label} and \ref{prop:suff_label}). Thus, relying solely on the quotient system by $\imapb$, we can not determine the class that the current history belongs to (see the last row in \eqref{eqn:dia}) and, hence can not determine whether a sentence describing the task is satisfied (or which sentences are satisfied).

Suppose the sets $U$ and $Y$ are specified, and at each stage $k$, $u_{k-1}$ is known and $y_k$ is observed. The following describes the problem for a passive task given a state-relabeled (history) \ac{ITS} $(\ifshist, U \times Y, \phi_{hist}, \imapb, \ifstask)$, in which $\imapb:\ifshist \rightarrow \ifstask$ is a task labeling that is not sufficient, and $\ifstask$ is the corresponding \ac{Ispace}. 


\begin{problem}[Find a sufficient \ac{Ispace} filter]\label{prob:suff_ref}
Find a sufficient refinement of $\imapb$.
\end{problem}

Note that $\ifshist/\imapb$ determines a lower bound on the partitioning of $\ifshist$ which is interpreted as the crucial information that can not be lost. Consequently, histories belonging to different equivalence classes with respect to $\imapb$ must always be distinguished from each other. 
However, Problem~\ref{prob:suff_ref} does not 
an upper bound. At the limit, a bijection from $\ifshist$ is always a sufficient refinement of $\imapb$. As stated previously, using history \ac{ITS} can create computational obstructions in solving problems. 
This motivates the following problem.



\begin{problem}[Filter minimization]\label{prob:min_suff_ref}
Find a minimal sufficient refinement of $\imapb$.
\end{problem}

We now consider a basic planning problem, for which $\ifst=\{0,1\}$, such that $\imapb^{-1}(1)\subset \ifshist$ is the set of histories that achieve the goal, and $\imapb^{-1}(0)\subset \ifshist$ is its complement. 
Most planning problems refer to finding a labeling function $\pi$ such that, when used to label the states of the internal system, guarantees task accomplishment. Then $\pi$ is called a {\em feasible} policy, which is defined in the following. 
Consider an external system $(X,f,U, h, Y)$.
Let $\mathcal{R}_X(\ifstask) \subseteq X$ be the set of initial states for which there exist a $k$ and histories $\histx_k$, $\histu_{k-1}$, and $\histy_k$, such that $x_{i+1}=f(x_i,u_i)$ and $y_i=h(x_i)$ for all $0<i<k$, and $\his_k \in \imapb^{-1}(1)$, in which $\his_k$ is the history \ac{Istate} corresponding to $\histu_{k-1}$ and $\histy_k$. 

\begin{definition}[Feasible policy for $\ifst$] Let $(\ifs, Y, \phi, \pi, U)$ and $(X,f,U, h, Y)$ be the state-relabeled transition systems corresponding to internal and external systems, respectively. Labeling function $\pi$ defined over $\ifs$ is a feasible policy for $\ifst$ if for all $x \in \mathcal{R}_X(\ifst)$, 
the history corresponding to the coupled internal-external system initialized at $(\is_0, x)$ belongs to $\imapb^{-1}(1)$.
\end{definition}

Most problems in the planning literature consider a fixed \ac{DITS} and look for a feasible policy for $\ifst$. This yields the following problem. Typically, the \ac{Ispace} considered is $X$ which corresponds to state estimation. Note that a \ac{DITS}, in other words, the robot brain, can be seen as an \ac{Ispace} filter itself. 

\begin{problem}[Find a feasible policy]
\label{prob:planning_wDITS}
Given $(\ifs, Y, \phi)$, an internal system (robot brain),
find a labeling function $\pi: \ifs \rightarrow U$ that is a feasible policy for $\ifst$.
\end{problem}

We can further extend the planning problem to consider an unspecified internal system which refers to finding a \ac{DITS}, and a policy such that the resulting histories satisfy the task description, that is, he problem of jointly finding an \ac{Ispace}-filter and a policy defined over its states. 
Recall from \eqref{eq:augItS} that a policy constrains the transitions of an \ac{ITS} to the ones that are realizable under that policy. 

\begin{problem}[Find a DITS and a feasible policy]\label{prob:planning_noDITS}
Given 
$(\ifshist, U \times Y, \phi_{hist})$, for
which $\imapb:\ifshist \rightarrow \ifstask$ is a task labeling and $\ifstask$ is the corresponding \ac{Ispace}, find a sufficient \ac{Imap} $\imap:\ifshist \rightarrow \ifs$ and a feasible policy $\pi$ for $\ifst$ as a labeling function for the resulting quotient system by $\imap$. 
\end{problem}

Note that $\imapb$ can already be sufficient, so that it is the minimal sufficient refinement of itself, however, this does not necessarily imply the existence of a feasible policy defined over $\ifstask$. 
Therefore, we are not looking for a refinement of $\imapb$ while describing the \ac{DITS} over which the policy is defined.
On the other hand, we can still talk about a notion of minimality. Let $(\ifs, Y, \phi, \pi, U)$ be a state-relabeled \ac{DITS} that solves Problem~\ref{prob:planning_noDITS} (or similarly Problem~\ref{prob:planning_wDITS}) which is the quotient system of history \ac{ITS} by $\imap$, then, $(\ifs, Y, \phi, \pi, U)$ is \emph{minimal for $\pi$} if there does not exist a sufficient \ac{Imap} $\imap'$ with $\imap \succ \imap'$ for which there exists a $\pi'$ for the quotient system by $\imap'$ that satisfies $\pi(\is)=\pi'(\imap'(\is))$.

\subsubsection{Learning a sufficient ITS}

Although learning and planning overlap significantly, some unique issues arise in pure learning (see also \cite{WeiSakLav22}). This corresponds to the case when $\ifst$ is not initially given but needs to be revealed through interactions with the external system, that is, respective action and observation histories. It is assumed that whether the sentence (or sentences) describing the task is satisfied or not can be assessed at a particular history \ac{Istate}. 

We can address both filtering and planning problems defined previously within this context, considering model-free and model-based cases. In the model-free case, the task is to compute a minimal sufficient ITS that is consistent with the actions and observations. Variations include {\em lifelong learning}, in which there is a single, `long' history I-state, or more standard learning in which the system can be restarted, resulting in multiple {\em trials}, each yielding a different history I-state. In the model-based case, partial specifications of $X$, $f$, and $h$ may be given, and unknown parameters are estimated using the history I-state(s).  Different results are generally obtained depending on what assumptions are allowed.  For example, do identical history I-states imply identical state trajectories? If not, then set-based, nondeterministic models may be assumed, or even probabilistic models based on behavior observed over many trials and assumptions on probability measure, priors, statistical independence, and so on.

\section{Illustrative Examples}\label{sec:ex}
In this section we provide some simple examples to show how the ideas presented in this paper apply to filtering and planning problems. All problems can be posed as well in a machine learning context for which $\ifst$ is not given but it is revealed through interactions between the internal and external as the input-output data.  

\begin{figure}[t!]
    \centering
    \subfigure[]{\includegraphics[width=.42\linewidth]{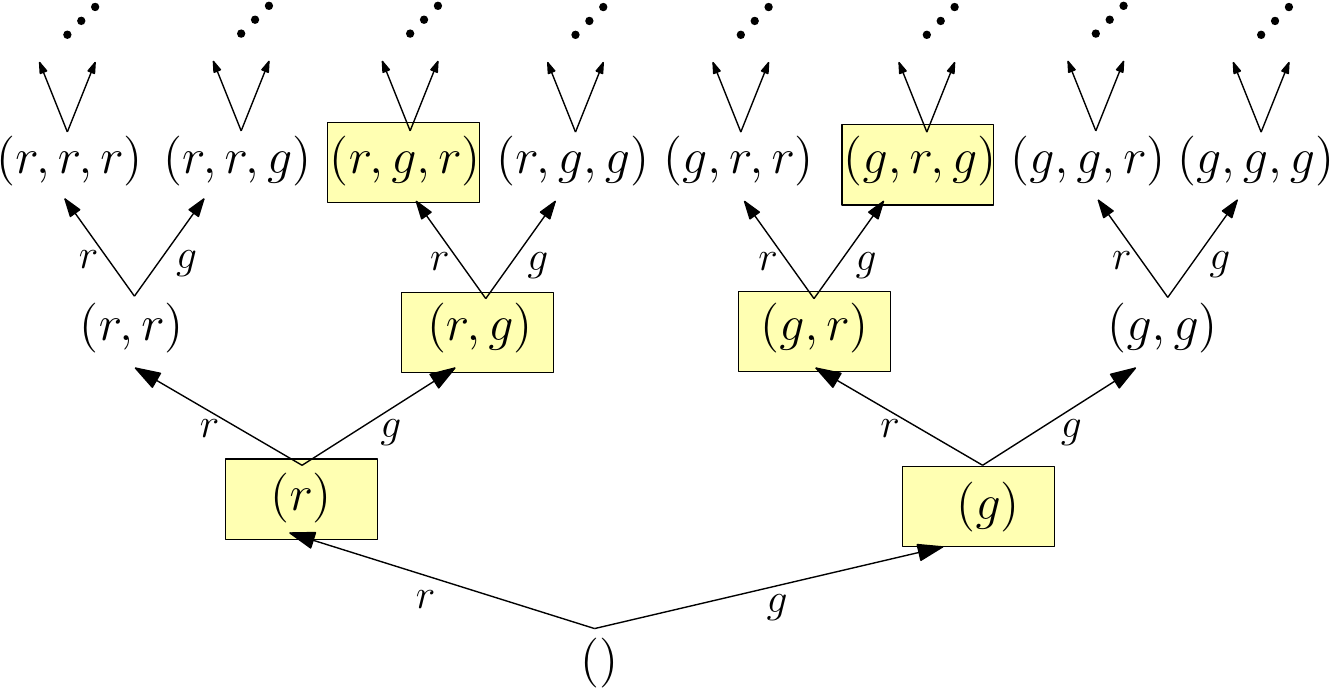}} 
	\subfigure[]{\includegraphics[width=.42\linewidth]{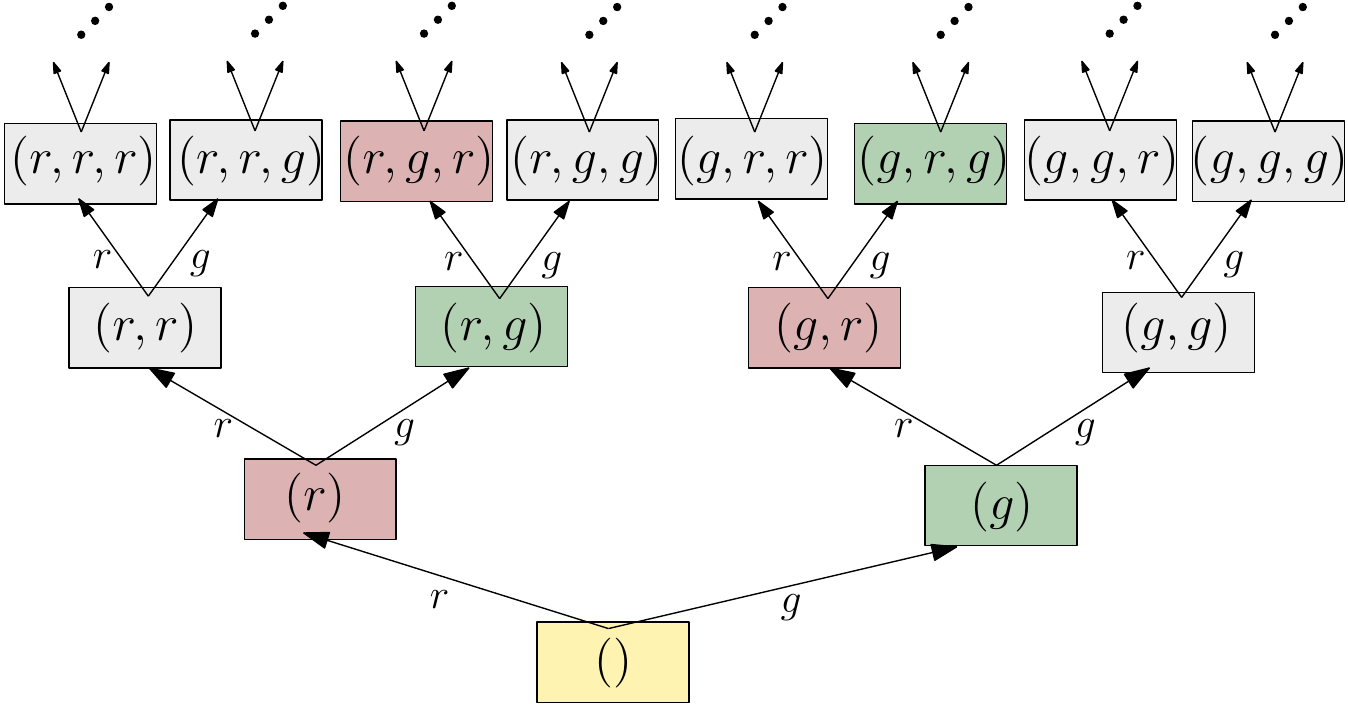}} \vspace{-3em} \\ 
	\subfigure[]{\includegraphics[width=.13\linewidth]{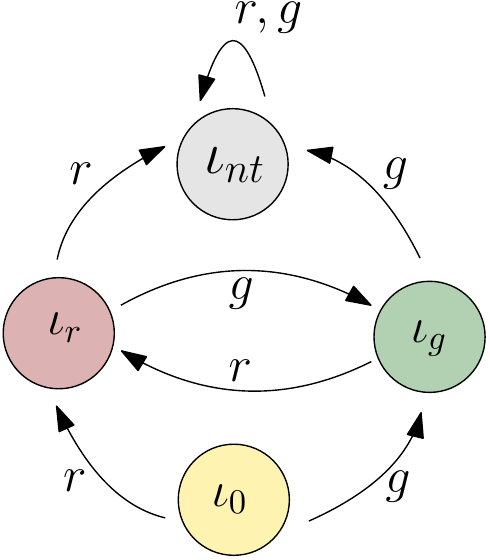}} \hspace{2em}
	\subfigure[]{\includegraphics[width=.15\linewidth]{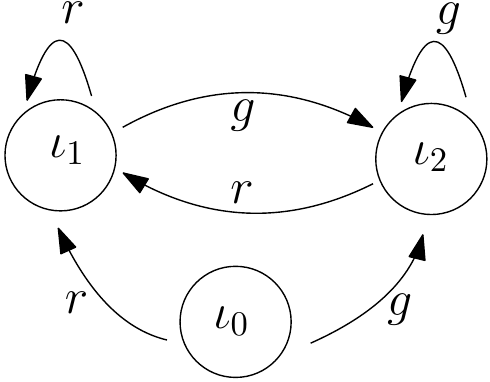}}\hspace{2em}
	\subfigure[]{\includegraphics[width=.12\linewidth]{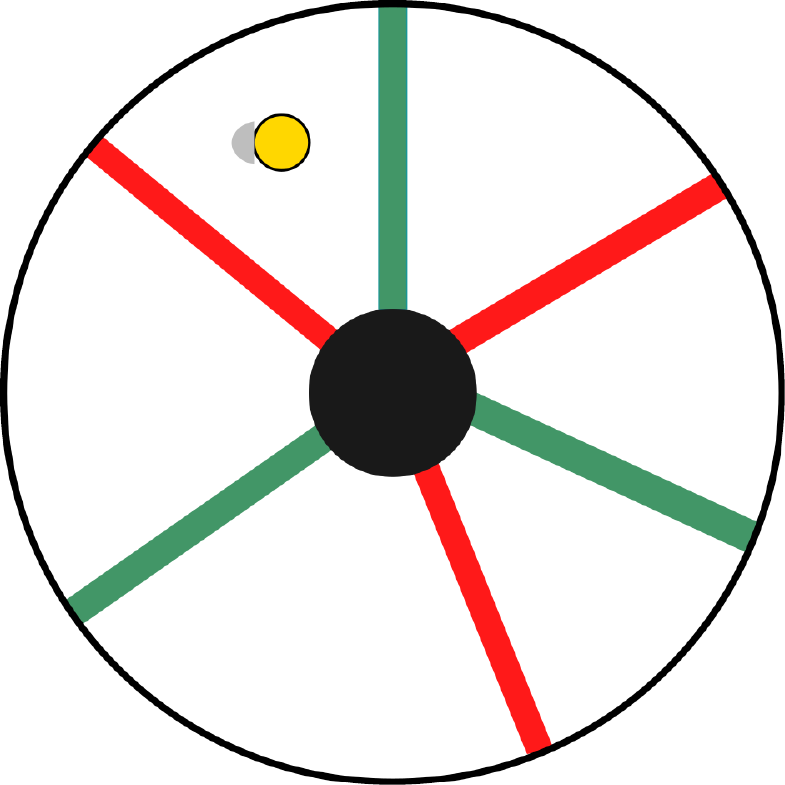}} \hspace{2em}
	\subfigure[]{\includegraphics[width=.15\linewidth]{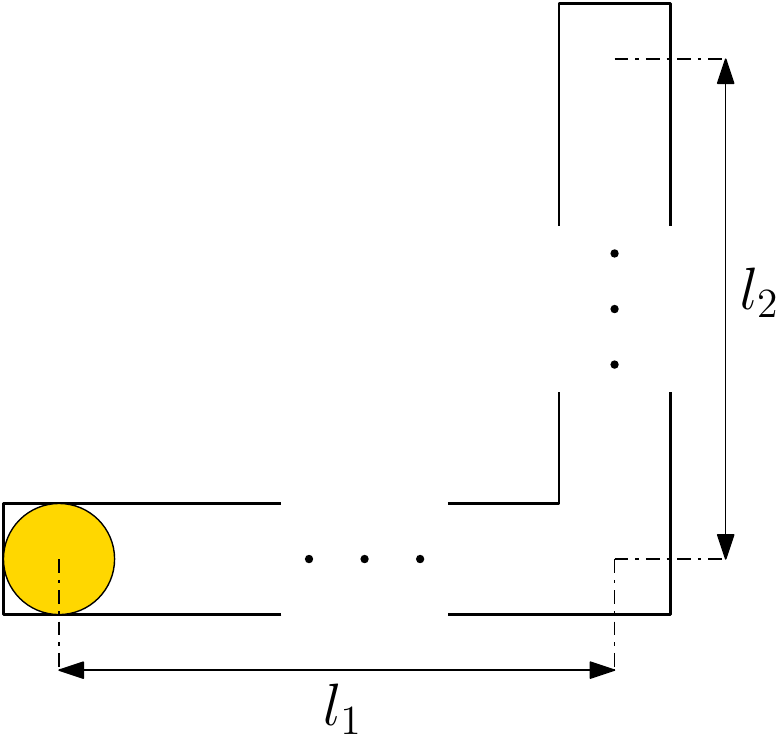}}\vspace{-1em}
    \caption{(a) State-relabeled history \ac{ITS} described in Example~\ref{ex:consistent_rot}, and labeling function $\imapb$ yellow colored states correspond to states that satisfy the task description. (b) Equivalence classes induced by $\imap'$; the minimal sufficient refinement of $\imapb$. (c) Quotient of the history \ac{ITS} by $\imap'$. (d)\ac{DITS} describing the internal system solving the planning problem described in Example~\ref{ex:plan_consistent_rot}. (e) Environment used in Examples~\ref{ex:consistent_rot},\ref{ex:plan_consistent_rot}, the obstacle (an open disk) is shown in black.  (f) L-shaped corridor; $l_1,l_2\leq l$.}
    \label{fig:red_green_gates}
    \vspace{-1.8em}
\end{figure}

Let $E\subseteq \Re^2$ be a bounded planar environment  (see Figure~\ref{fig:red_green_gates}(e)) that is partitioned into regions separated by gates 
Each gate is either green or red whose color can be detected by the robot's color sensor and follows the rule that each region shares a boundary with exactly two gates; one green and one red.
The set of possible observations are 
$Y=\{r,g\}$. 

\begin{example}\label{ex:consistent_rot} 
This example considers a filtering problem from the perspective of an independent observer. Suppose the actions taken by the robot are not observable and the only information about the system is the history of readings coming from the robot's color sensor; for example, $(r,r,r,g,r,g)$. 
Then, the history \ac{Ispace} is the set of all finite length sequences of elements of $Y$, that is, $\ifshist=Y^*$, which refers to the free monoid generated by the elements of $Y$ (or the Kleene star of $Y$). Hence, the history \ac{ITS}
can be represented as an infinite binary tree. The task is to determine whether the robot crosses the gates consistently (in a clockwise or counterclockwise manner) or not. Hence, the preimages of $\imapb: \ifshist \rightarrow \ifstask$ partition $\ifshist$ into two subsets: one which the condition is satisfied and the others. The labeling induced by $\imapb$ is shown in Figure~\ref{fig:red_green_gates}(a). Clearly, $\imapb$ is not sufficient since there exist \ac{Istate}s $\his, \his'$ such that $\imapb(\his)=\imapb(\his')$ and there exists a $y$ for which $\imapb(\fmaphist(\his,y)) \neq \imapb(\fmaphist(\his',y))$; for example consider $\his=(r,g)$, $\his'=(r,g,r)$ and $y=g$. A sufficient refinement of $\imapb$ can be obtained (equivalence classes shown in \ref{fig:red_green_gates}(b)), denote as $\imap'$, for which the quotient \ac{DITS} is shown in Figure~\ref{fig:red_green_gates}(d). Furthermore, $\imap'$ is a minimal sufficient refinement of $\imapb$ since it follows from Proposition~\ref{prob:min_suff_ref} that if a labeling is not minimal then there is a minimal one that is strictly coarser and is still sufficient. However, neither of the subsets that belong to $\ifshist/\imap'$ can be merged, since merging $\is_{nt}$ (colored gray) with anything else violates the condition that $\imap'$ is a refinement of $\imapb$ and any pairwise merge of the others violate sufficiency.
\end{example}

Suppose the robot has a boundary detector, and executes a bouncing motion using the two motion primitives $U=\{u_1,u_2\}$, in which $u_1$ is move forward and $u_2$ is rotate in place.  
Let a basic motion be move forward and bounce off the walls, which can be implemented using the elements of $U$.
We assume that the boundary detector and color sensor readings do not arrive simultaneously.

\begin{example}\label{ex:plan_consistent_rot} 
We now consider a planning problem (that belongs to the class described in Problem~\ref{prob:planning_noDITS}) for which the goal is to ensure that the robot crosses the gates consistently. 
The history \ac{Ispace} of the planner is $\ifshist=(U \times Y)^*$ and the preimages of $\imapb$ partition $\ifshist$ into two sets; the histories that satisfy the predicate and the ones that do not. 
Then, a \ac{DITS} with only three states (see Figure~\ref{fig:red_green_gates}(d)) can be derived using the mapping $\imap: \ifshist \rightarrow \ifs$, in which $\ifs=\{i_0,i_1,i_2\}$
such that for $\pi(\is_1)$ boundary with the red gate is set as a wall, $\pi(\is_2)$ the boundary with the green gate is set as a wall, and for $\pi(\is_0)$ no boundary with gates are considered as a wall.  We assume that a bouncing motion can be determined using the motion primitives so that the resulting trajectory will strike every open interval in the boundary of every region infinitely often, with non-zero, non-tangential velocities \cite{BobSanCzaGosLav11}. 
\end{example}

Consider a robot in an L-shaped planar corridor (Figure~\ref{fig:red_green_gates}(f)). Let $\mathcal{E}_l$ be the set of all such environments such that $l_1,l_2 \leq l$, in which $l_1$ and $l_2$ are the dimensions of the corridor bounded by $l$. We assume that the minimum length/width is larger than the robot radius, that is, $1$. The state space $X$ is defined as the set of all pairs $(q, E_i)$, in which $(q_1,q_2) \in E_i$, and $E_i \in \mathcal{E}_{l}$. The action set is $U=\{0,1\}\times\{0,1\}$ which corresponds to moving one step in one of the 4 cardinal directions; if the boundary is reached, the state does not change. The robot has a sensor that reports $1$ if the motion is blocked.

\begin{example}
Consider a model-based history \ac{ITS} with $\his_0 \subset X$ that specifies the initial position as $q_0=(0,0)$ but does not specify the environment so that it can be any $E_i \in \E_l$. 
Let $\ifshist$ be its set of states
and $\imap_{ndet}:\ifshist \rightarrow \pow(X)$ be an \ac{Imap} that maps 
$\his_k$ 
to a subset of $X_k \subseteq X$. 
Since $(X,U,f,h,Y)$, and $X_0=\his_0$ are known, transitions for the quotient system can be described by induction as $X_{k+1}=\hat{X}_{k+1}(X_k,u_k) \cap H(y_{k+1})$, in which $\hat{X}_{k+1}=\bigcup_{x_k\in X_k}f(x_k,u_k)$ and $H(y_{k+1})\subseteq X$ is the set of all states that could yield $y_{k+1}$. By construction, $\imap_{ndet}$ is sufficient.
Suppose $\imapb: \ifshist/\imap_{ndet} \rightarrow \ifst$ is a task labeling for localization that assigns each singleton a unique label and all the other subsets are labeled the same.
Since the transition corresponding to $([X']_{\imapb}, (u,y))$ in which $X'\subseteq X$ is not a singleton can lead to multiple labels $[x']_\imapb$, in which $x'$ is a singleton, $\imapb$ is not sufficient. Furthermore, $\imap_{ndet}$ is a minimal sufficient refinement of $\imapb$ because it is sufficient and because there does not exist a sufficient $\imap$ such that $\imap_{ndet} \succ \imap \succeq \imapb$. Suppose $\imap$ exists, that would mean some equivalence classes can be merged. However, this is not possible because merging any of the non-singleton subsets violates sufficiency (as shown for $\imapb$) and merging singletons with others violates that it is a refinement.
A policy can be described over $\ifshist/\imap_{ndet}$; $u=(1,0)$ starting from $X_0$ until $y_k=1$ is obtained and applying $u=(0,1)$ starting from $X_k$ until $y_n=1$ is obtained, then it is found that $q=(k,n)$ and $E$ is the corridor with $l_1=k$, $l_2=n$.  
\end{example}

\section{Discussion}\label{sec:con}

We have introduced a mathematical framework for determining minimal feasible policies for robot systems, by comparing ITSs over \ac{Ispace}s.
The uniqueness and minimality results are quite general: $X$ and $U$ could be discrete, typical configuration spaces, or more exotic, such as the power set of 
all functions from an infinite-dimensional Hilbert space into an infinite-dimensional Banach space.

Nevertheless, there are opportunities to expand the general theory.  For example, we assumed that $u$ is both the output of a 
policy
and the actuation stimulus in the physical world; more generally, we should introduce a mapping from an action symbol $\sigma \in \Sigma$ to a control function $\histu \in \histU$ so that plans are expressed as $\pi: \ifs \rightarrow \Sigma$ and each $\sigma=\pi(\iota)$ produces energy in the physical world via a mapping from $\Sigma$ to $\histU$. 
Another extension is to consider stochastic models, that amounts to an \ac{ITS} with a probabilistic \ac{Ispace}, and in which ways it ties to the representations used in the literature such as \ac{PSRs} \cite{LitSut01}.
Other potential extensions include continuous-time ``transitions" and novel logics to consider satisfiability questions over spaces of robot systems and tasks. An interesting direction is to consider the hardware and actuation models are variables, and fix other model components.

A grand challenge remains: The results here are only a first step toward producing a more complete and unique theory of robotics that clearly characterizes the relationships between common tasks, robot systems, environments, and algorithms that perform filtering, planning, or learning.  We should search for lattice structures that play a role similar to that of language class hierarchies in the theory of computation.  This includes the structures of the current paper and the sensor lattices of \cite{Lav12b,ZhaShe21}.  Many existing filtering, planning, and learning methods can be formally characterized within this framework, which would provide insights into relative complexity, completeness, minimality, and time/space/energy tradeoffs.  

\vspace*{-3.5mm}

\bibliographystyle{plain}
\bibliography{main,pub,more}

\begin{thebibliography}{10}

\bibitem{BobSanCzaGosLav11}
L.~Bobadilla, O.~Sanchez, J.~Czarnowski, K.~Gossman, and S.~M. LaValle.
\newblock Controlling wild bodies using linear temporal logic.
\newblock In {\em Proceedings Robotics: Science and Systems}, 2011.

\bibitem{BruBerBraLecHasRusGro22}
Axel Brunnbauer, Luigi Berducci, Andreas Brandst{\"{a}}tter, Mathias Lechner,
  Ramin~M. Hasani, Daniela Rus, and Radu Grosu.
\newblock Latent imagination facilitates zero-shot transfer in autonomous
  racing.
\newblock \url{https://arxiv.org/abs/2103.04909v2}.

\bibitem{FaiGirKrePap09}
G.~E. Fainekos, A.~Girard, H.~Kress-Gazit, and G.~J. Pappas.
\newblock Temporal logic motion planning for dynamic mobile robots.
\newblock {\em Automatica}, 45(2):343--352, 2009.

\bibitem{Fri10}
K.~Friston.
\newblock The free-energy principle: {A} unified brain theory?
\newblock {\em Nature Reviews Neuroscience}, 11(2):127--138, 2010.

\bibitem{HutMyi12}
D.~D. Hutto and E.~Myin.
\newblock {\em Radicalizing enactivism: Basic minds without content}.
\newblock MIT Press, 2012.

\bibitem{Kod21}
D.~E. Koditschek.
\newblock What is robotics? {Why} do we need it and how can we get it?
\newblock {\em Annual Review of Control, Robotics, and Autonomous Systems},
  4:1--33, May 2021.

\bibitem{KotMooTon18}
Ü. Kotta, C.~H. Moog, and M.~Tõnso.
\newblock Minimal realizations of nonlinear systems.
\newblock {\em Automatica}, 95:207--212, 2018.

\bibitem{Lav06}
S.~M. LaValle.
\newblock {\em Planning Algorithms}.
\newblock Cambridge University Press, Cambridge, U.K., 2006.
\newblock Also available at http://lavalle.pl/planning/.

\bibitem{Lav12b}
S.~M. LaValle.
\newblock {\em Sensing and Filtering: {A} Fresh Perspective Based on Preimages
  and Information Spaces}, volume 1:4 of {\em Foundations and Trends in
  Robotics Series}.
\newblock Now Publishers, Delft, The Netherlands, 2012.

\bibitem{LitSut01}
Michael Littman and Richard~S Sutton.
\newblock Predictive representations of state.
\newblock {\em Advances in neural information processing systems}, 14, 2001.

\bibitem{OkaShe17}
J.~M. O'Kane and D.~A. Shell.
\newblock Concise planning and filtering: {Hardness} and algorithms.
\newblock {\em IEEE Transactions on Automation Science and Engineering},
  14(4):1666--1681, 2017.

\bibitem{WeiSakLav22}
V.~Weinstein, B.~Sakcak, and S.~M. LaValle.
\newblock An enactivist-inspired mathematical model of cognition.
\newblock under review, 2022.

\bibitem{ZhaShe21}
Y.~Zhang and D.~A. Shell.
\newblock Lattices of sensors reconsidered when less information is preferred.
\newblock In {\em IEEE International Conference on Robotics and Automation},
  2021.

\end{thebibliography}

\end{document}